\newcommand{\cameraready}[1]{}
\newcommand{\Bo}[1]{{\color{blue} [Bo: #1]}}
\newcommand{\chenjun}[1]{{\color{red} [Chenjun: #1]}}
\newcommand{\algabb}{Diff-SR\xspace}
\newcommand{\algname}{Diffusion Spectral Representation\xspace}
\definecolor{mydarkblue}{rgb}{0,0.08,0.8}
\title{Diffusion Spectral Representation for Reinforcement Learning}
\author{%
    Dmitry Shribak\thanks{Equal Contribution. Correspondence to: Bo Dai <bodai@cc.gatech.edu>}\\
    Georgia Tech \\
    \texttt{shribak@gatech.edu}\\
    \And
    Chen-Xiao Gao$^*$ \\
    Nanjing University \\
    \texttt{gaocx@lamda.nju.edu.cn} \\
    \And
    Yitong Li \\
    Georgia Tech \\
    \texttt{yli3277@gatech.edu} \\
    \And
    Chenjun Xiao \\
    CUHK(SZ) \\
    \texttt{chenjunx@cuhk.edu.cn} \\
    \And
    Bo Dai \\
    Georgia Tech \\
    \texttt{bodai@cc.gatech.edu}\\
}
\begin{document}

\maketitle



\begin{abstract}

Diffusion-based models have achieved notable empirical successes in reinforcement learning (RL) due to their expressiveness in modeling complex distributions.  
Despite existing methods being promising, the key challenge of extending existing methods for broader real-world applications lies in the computational cost at inference time, \ie, sampling from a diffusion model is considerably slow as it often requires tens to hundreds of iterations to generate even one sample.  
To {circumvent} this issue, we propose to leverage the flexibility of diffusion models for RL from a representation learning perspective.  
In particular, by exploiting the connection between diffusion models and energy-based models, we develop \emph{\algname~(\algabb)}, a coherent algorithm framework that enables extracting sufficient representations for value functions in Markov decision processes~(MDP) and partially observable Markov decision processes~(POMDP).
We further demonstrate how \algabb facilitates efficient policy optimization and practical algorithms while explicitly bypassing the difficulty and inference cost of sampling from the diffusion model.   
Finally, we provide comprehensive empirical studies to verify the benefits of \algabb in delivering robust and advantageous performance across various benchmarks with both fully and partially observable settings.
\end{abstract}

\setlength{\abovedisplayskip}{1pt}
\setlength{\abovedisplayshortskip}{1pt}
\setlength{\belowdisplayskip}{1pt}
\setlength{\belowdisplayshortskip}{1pt}
\setlength{\jot}{1pt}

\setlength{\floatsep}{1ex}
\setlength{\textfloatsep}{1ex}

\section{Introduction}\label{sec:intro}

Diffusion models have demonstrated remarkable generative modeling capabilities, achieving significant success in producing high-quality samples across various domains such as images and videos \citep{ramesh2021zero,saharia2022photorealistic,videoworldsimulators2024}. 
In comparison to other generative approaches, diffusion models stand out for their ability to represent complex, multimodal data distributions, a strength that can be attributed to two primary factors.
First, diffusion models progressively denoise data by reversing a diffusion process. 
This iterative refinement process empowers them to capture complicated patterns and structures within the data distribution, thus enabling the generation of samples with unprecedented accuracy \citep{ho2020denoising}. 
Second, diffusion models exhibit impressive mode coverage, effectively addressing a common issue of mode collapse encountered in other generative approaches \citep{song2020score}.

The potential of diffusion models is increasingly being investigated for sequential decision-making tasks. The inherent flexibility of diffusion models to accurately capture complex data distributions makes them exceptionally suitable for both model-free and model-based methods in reinforcement learning (RL). 
There are three main approaches that attempts to apply diffusion models, including
\emph{diffusion policy} \citep{wang2022diffusion,chi2023diffusion}, 
\emph{diffusion-based planning} \citep{janner2022planning,jackson2024policy,du2024learning}, and  
\emph{diffusion world model} \citep{ding2024diffusion,rigter2023world}.  
Empirical results indicate that diffusion-based approaches can stabilize the training process and enhance empirical performance compared with their conventional counterparts, especially in environments with high-dimensional inputs.

The flexibility of diffusion models, however, also comes with a substantial inference cost: generating even a single sample from a diffusion model is notably slow, typically requiring tens to thousands of iterations \citep{ho2020denoising,lu2022dpm,zhang2022fast,song2023consistency}. 
Furthermore, prior works also consider generating multiple samples to enhance quality, which further exacerbates this issue \citep{rigter2023world}. 
The computational demands are particularly problematic for RL, since whether employing a diffusion-based policy or diffusion world model, the learning agent must frequently query the model for interactions with the environment during the learning phase or when deployed in the environment.  
This becomes the key challenge when extending diffusion-based methods for broader applications with more complex state spaces. 
Meanwhile, the planning with exploration issue has not been explicitly considered in the existing diffusion-based RL algorithms. The flexibility of diffusion models in fact induces extra difficulty in the implementation of the principle of optimism in the face of uncertainty in the planning step to balance the inherent trade-off between exploration vs. exploitation, which is indispensable in the online setting to avoid suboptimal policies~\citep{lattimore2020bandit,amin2021survey}.

In conclusion, 
there has been insufficient work considering both efficiency and computation tractability for planning and exploration in a unified and coherent perspective, when applying diffusion model for sequential decision-making. This raises a very natural question, \ie, 
\vspace{-2mm}
\begin{center}
    \emph{Can we exploit the flexibility of diffusion models with efficient planning and exploration for RL?}
\end{center}
\vspace{-2mm}
In this paper, 
we provide an {\bf affirmative} answer to this question, based on our key observation that diffusion models, beyond their conventional role as generative tools, can play a crucial role in learning sufficient representations for RL. Specifically, 
\begin{itemize}[leftmargin=*, nosep]
    \item By exploiting the energy-based model view of the diffusion model, we develop a coherent algorithmic framework \emph{\algname~(\algabb)}, designed to learn representations that capture the latent structure of the transition function in~\secref{subsec:linear_repr};
    
    \item We then show that such diffusion-based representations are {sufficiently expressive} to represent the value function of any policy, which paves the way for efficient planning and exploration, circumventing the need for sample generation from the diffusion model, and thus, 
    avoiding the inference costs associated with prior diffusion-based methods in~\secref{subsec:planning_exploration};
    
    \item We conduct comprehensive empirical studies to validate the benefits of~\algabb, in both fully and partially observable RL settings, demonstrating its robust, superior performance and efficiency across various benchmarks in~\secref{sec:exp}. 
\end{itemize}
\vspace{-2mm}
\section{Preliminaries}\label{sec:prelim}
\vspace{-2mm}

In this section, we briefly introduce the Markov Decision Processes, as the standard mathematical abstraction for RL, and diffusion models, as the building block for our algorithm design.

\paragraph{Markov Decision Processes (MDPs).}
We consider \emph{Markov Decision Processes}~\citep{puterman2014markov} specified by the tuple $\mathcal{M} = \langle \mathcal{S}, \mathcal{A}, \PP, r, \gamma, \mu_0 \rangle$, where $\mathcal{S}$ is the state space, $\mathcal{A}$ is the action space, $\PP:\mathcal{S} \times \mathcal{A} \to \Delta(\mathcal{S})$
is the transition function, $r:\mathcal{S} \times \mathcal{A} \to \mathbb{R}$ is the reward function, $\gamma \in [0, 1)$ is the discount factor, $\mu_0\in\Delta(\mathcal{S})$ is the initial state distribution\footnote{We use the standard notation $\Delta(\mathcal{X})$ to denote the set of probability distributions over a finite set $\mathcal{X}$}.  
The value function specifies the discounted cumulative rewards obtained by following a policy $\pi:\mathcal{S}\rightarrow \Delta( \mathcal{A})$, 
$V^\pi(s) = \mathbb{E}^{\pi} \left[\sum_{t=0}^{\infty} \gamma^t r(s_t, a_t) | s_0 = s \right]$, 
where $\mathbb{E}^\pi$ denotes the expectation under the distribution induced by the interconnection of $\pi$ and the environment. 
The state-action value function is defined by 
\[
Q^\pi(s,a) = r(s,a ) + \gamma \mathbb{E}_{s' \sim \PP(\cdot | s,a)} \left[ V^\pi(s') \right] \, .
\]
The goal of RL is to find an optimal policy that maximizes the policy value, \ie, $\pi^* = \argmax_{\pi}\,\mathbb{E}_{s\sim \mu_0}[V^\pi(s)]$. 

For any MDP, one can always factorize the transition operator through the singular value decomposition~(SVD), \ie, 
\begin{equation}\label{eq:svd}
    \PP(s^\prime|s, a) = \left\langle \phi^*(s, a), \mu^*(s^\prime)\right\rangle
\end{equation}
with $\inner{\cdot}{\cdot}$ defined as the inner product.
\citet{yao2014pseudo,jin2020provably,agarwal2020flambe} considered a subset of MDPs, in which $\phi^*\rbr{s, a} \in \RR^d$ with finite $d$, which is known as \emph{Linear/Low-rank MDP}. The subclass is then generalized for infinite spectrum with fast decay eigenvalues~\citep{ren2022latent}.  
Leveraging this specific structure of the function class, this spectral view serves as an instrumental framework for examining the statistical and computational attributes of RL algorithms in the context of function approximation.
In fact, the most significant advantage of exploiting said spectral structure is that we can represent its state-value function $Q^\pi(s,a)$ as a linear function with respect to $[r(s,a), \phi^*(s,a)]$ for \emph{any policy} $\pi$,
\begin{equation}\label{eq:exp_q}
    Q^\pi(s, a) = r(s, a) + \gamma\mathbb{E}_{s^\prime\sim \PP(\cdot|s, a)}\left[V^\pi(s^\prime)\right] = r(s, a) + \left\langle \phi^*(s, a), \int_{\mathcal{S}} \mu^*(s^\prime) V^\pi(s^\prime) d s^\prime\right\rangle.
\end{equation}

It's important to highlight that in many practical scenarios, the feature mapping $\phi^*$ is often unknown. In the meantime, the learning of the $\phi^*$ is essentially equivalent to un-normalized conditional density estimation, which is notoriously difficult~\citep{lecun2006tutorial,song2021train, dai2019exponential}. Besides the optimization intractability in learning, the coupling of exploration to learning also compounds the difficulty: learning $\phi^*$ requires full-coverage data to capture $\PP(s'|s, a)$, while the design of exploration strategy often relies on an accurate $\phi^*$~\citep{jin2020provably, yang2020provably}. 
Recently, a range of spectral representation learning algorithms has emerged to address these challenges and provide an estimate of $\phi^*$ in both online and offline settings~\citep{uehara2021representation}. However, existing methods either require designs of negative samples~\citep{ren2022spectral,qiu2022contrastive,zhang2022making}, or rely on additional assumptions on $\phi^*$~\citep{ren2022free,ren2022latent}.

\paragraph{Diffusion Models.} Diffusion models~\citep{sohl2015deep,ho2020denoising,song2020score} are composed of a forward Markov process gradually perturbing the observations $x_0\sim p_0\rbr{x}$ to a target distribution $x_T\sim q_T\rbr{x}$ with corruption kernel $q_{t+1|t}$, and a backward Markov process recovering the original observations distribution from the noisy $x_T$. After $T$ steps, the forward process forms a joint distribution, 
$$
q_{0:T}\rbr{x_{0:T}} = p_0\rbr{x_0}\prod_{t=0}^{T-1}q_{t+1|t}\rbr{x_{t+1}|x_{t}}.
$$

The reverse process can be derived by Bayes' rule from the joint distribution, \ie, 
$$
q_{t|t+1}\rbr{x_{t}|x_{t+1}} = \frac{q_{t+1|t}\rbr{x_{t+1}|x_{t}}q_{t}\rbr{x_{t}}}{q_{t+1}\rbr{x_{t+1}}},
$$
with $q_t\rbr{x_t}$ as the marginal distribution at $t$-step. Although we can obtain the expression of reverse kernel $q_{t|{t+1}}(\cdot|\cdot)$ from Bayes's rule, it is usually intractable. 
Therefore, the reverse kernel is usually parameterized with a neural network, denoted as $p^\theta(x_{t}|x_{t+1})$.
Recognizing that the diffusion models are a special class of latent variable models~\citep{sohl2015deep,ho2020denoising}, maximizing the ELBO emerges as a natural choice for learning,

\vspace{-3mm}

\begin{equation*}
\resizebox{\hsize}{!}{$\ell_{elbo}\rbr{\theta} = \EE_{p_0\rbr{x}}\sbr{ D_{KL}\rbr{q_{T|0}||q_T} + \sum_{t=1}^{T-1}\EE_{q_{t|0}}\sbr{D_{KL}\rbr{q_{t|t+1, 0} || p^\theta_{t|t+1}}} - \EE_{q_{1|0}}\sbr{\log p^\theta_{0|1}\rbr{x_0|x_1} }}
    $}
\end{equation*}

%
For continuous domain, the forward process of corruption usually employs Gaussian noise, \ie, $q_{t+1|t}(x_{t+1}|x_{t})=\mathcal{N}(x_{t+1};\sqrt{1-\beta_{t+1}}x_{t},\beta_{t+1}{I})$, where $t\in \cbr{0,\ldots, T-1}$. The kernel for backward process is also Gaussian and can be parametrized as $p^\theta\rbr{x_{t}|x_{t+1}} = \Ncal\rbr{x_{t}; \frac{1}{\sqrt{1 - \beta_{t+1}}}\rbr{x_{t+1} + \beta_{t+1} s^\theta\rbr{x_{t+1}, t+1}}, \beta_{t+1} I}$, where $s^\theta\rbr{x_{t+1}, t+1}$ denotes the score network with parameters $\theta$. The ELBO can be specified as
\vspace{-1mm}
\begin{equation}\label{eq:score_matching}
    \ell_{sm}\rbr{\theta} = \sum_{t=1}^{T}\rbr{1 - \alpha_t}\EE_{p_0}\EE_{q_{t|0}}\sbr{\nbr{s^\theta\rbr{x_{t}, t} - \nabla_{x_t} \log q_{t|0}\rbr{x_{t}|x_0} }^2},
\end{equation}
where $\alpha_t = \prod_{i=1}^{t}(1 - \beta_i)$. With the learned $s^\theta\rbr{x, t}$, the samples can be generated by sampling $x_T \sim \mathcal{N}(\mathbf{\zero}, \mathbf{I})$, and then following the estimated reverse Markov chain with $p^\theta\rbr{x_t|x_{t+1}}$ iteratively. 
To ensure the quality of samples from diffusion models, the reverse Markov chain requires tens to thousands of iterations, which induces high computational costs for diffusion model applications.

\paragraph{Random Fourier Features.} 
Random Fourier features \citep{rahimi2007random, dai2014scalable} allow us to approximate infinite-dimensional kernels using finite-dimensional feature vectors. Bochner's theorem states that a continuous function of the form $k(x, y) = k(x - y)$ can be represented by a Fourier transform of a bounded positive measure \citep{bochner1932}. For Gaussian kernel $k(x-y)$, consider the feature $z_\omega(x)=\exp(-\mathbf{i}\omega^\top x)$, with $\omega \sim \mathcal{N}(0, I)$:
\begin{equation}
    \begin{aligned}
        k(x-y)&=\left[\int p(\omega)\exp(-\mathbf{i}\omega^\top (x-y))\mathrm{d}\omega \right]
        =\mathbb{E}_{\omega}[-\exp(\mathbf{i}\omega^\top (x-y))]]\\        
        &=\mathbb{E}_{\omega}\left[z_\omega(x)z_\omega(y)^*\right]
        =\inner{z_\omega(x)}{z_\omega(y)}_{\mathcal{N}(\omega)}.
    \end{aligned}
\end{equation}
where $\langle\cdot, \cdot\rangle_{\mathcal{N}(\omega)}$ is a shorthand for $\mathbb{E}_{\omega\sim \mathcal{N}(0, I)}[\langle\cdot, \cdot\rangle]$. By sampling $\omega_1, \omega_2, \ldots, \omega_N\sim \mathcal{N}(0, I)$, we can approximate $k(x-y)$ with the inner product of finite-dimentional vectors $\hat{\boldsymbol{z}}_{\boldsymbol{\omega}}(x)=\frac 1{\sqrt{N}}(z_{\omega_1}(x), z_{\omega_2}(x), \ldots, z_{\omega_N}(x))$ and $\hat{\boldsymbol{z}}_{\boldsymbol{\omega}}(y)=\frac 1{\sqrt{N}}(z_{\omega_1}(y), z_{\omega_2}(y), \ldots, z_{\omega_N}(y))$.

\vspace{-2mm}
\section{\algname for Efficient Reinforcement Learning}\label{sec:diff_repr}
\vspace{-2mm}

It is well known that the generation procedure of diffusion models becomes the major barrier for real-world application, especially in RL. 
Moreover, the complicated generation procedure makes the uncertainty estimation for exploration intractable. The spectral representation $\phi^*(s, a)$ provides an efficient way for planning and exploration, as illustrated in~\secref{sec:prelim}, which inspires our \emph{\algname~(\algabb)} for RL, as our answer to the motivational question. As we will demonstrate below, the representation view of diffusion model enjoys the flexibility and also enables efficient planning and exploration, while directly bypasses the cost of sampling. For simplicity, we introduce~\algabb in MDP setting in the main text. However, the proposed \algabb is also applicable for POMDPs as shown  in~\appref{appendix:pomdp}. 
We first illustrate the inherent challenges of applying diffusion models for representation learning in RL. 

\subsection{An Impossible Reduction to Latent Variable Representation~\citep{ren2022latent}}\label{subsec:difficulty}


In the latent variable representation~(LV-Rep)~\citep{ren2022latent}, the latent variable model is exploited for spectral representation $\phi^*\rbr{s,a}$ in~\eqref{eq:svd}, by which an arbitrary state-value function can be linearly represented, and thus, efficient planning and exploration is possible. Specifically, in the LV-Rep, one considers the factorization of dynamics as
\begin{equation}\label{eq:lv_rep}
    \PP(s'|s, a) = \int p(z|s, a)p(s'|z)dz = \inner{p(z|s, a)}{p(s'|z)}_{L_2}.
\end{equation}
By recognizing the connection between~\eqref{eq:lv_rep} and SVD of transition operator~\eqref{eq:svd}, the learned latent variable model $p(z|s, a)$ can be used as $\phi^*(s, a)$ for linearly representing the $Q^\pi$-function for an arbitrary policy $\pi$. 

Since the diffusion model can be recast as a special type of latent variable model~\citep{ho2020denoising}, the first straightforward idea is to extend LV-Rep with diffusion models for $p(z|s, a)$. We consider the following {forward process} that perturbs each $z_{i-1}$ with Gaussian noises:
\[
p(z_i|z_{i-1}, s, a), \quad \forall i=1,\ldots, k, 
\] 
with $z_0 = s'$. 
Then, following the definition of the diffusion model, the {backward process} can be set as
\begin{equation}
    q(z_{i-1}|z_i, s, a), \quad \forall i=0,\ldots, k, 
\end{equation}
which are Gaussian distributions that denoise the perturbed latent variables. Thus, the dynamics model can be formulated as
\begin{equation}\label{eq:diff_dynamics}
    \PP(s'|s, a) = \int \prod_{i=2}^{k} q(z_{i-1}|z_i, s, a) q(s'|z_1, s, a)d \cbr{z_i}_{i=1}^k.
\end{equation}
Indeed, Equation~\eqref{eq:diff_dynamics} converts the diffusion model to a latent variable model for dynamics modeling. However, the dependency of $(s, a)$ in $q(s'|z_1, s, a)$ correspondingly induces the undesirable dependency of $(s, a)$ into $\mu(s')$ in~\eqref{eq:svd}, and therefore the factorization provided by the diffusion model cannot linearly represent the $Q^\pi$-function — the vanilla LV-Rep reduction from diffusion models is impossible. 

\vspace{-2mm}
\subsection{\algname from Energy-based View}\label{subsec:linear_repr}
\vspace{-2mm}

Instead of reducing to LV-Rep, in this work we extract the spectral representations by exploiting the relationship between diffusion models and energy-based models~(EBMs). 


\paragraph{Spectral Representation from EBMs.} We parameterize the transition operator $\PP(s'|s, a)$ using an EBM, \ie, 
\begin{equation}\label{eq:factor_ebm}
    \PP(s'|s, a) = \exp\rbr{\psi(s, a)^\top \nu\rbr{s'} - \log Z\rbr{s, a}},\, Z\rbr{s, a} = \int \exp\rbr{\psi(s, a)^\top \nu\rbr{s'}} d s'.
\end{equation}

By simple algebra manipulation, 
\[
\psi(s, a)^\top \nu\rbr{s'} = -\frac{1}{2}\rbr{\nbr{\psi\rbr{s, a} - \nu\rbr{s'}}^2 - \nbr{\psi\rbr{s, a}}^2 - \nbr{\nu\rbr{s'}}^2},
\]
we obtain the quadratic potential function, leading to 
\begin{equation}\label{eq:quadratic_ebm}
\textstyle
     \PP(s'|s, a) \propto \exp\rbr{{\nbr{\psi\rbr{s, a}}^2}/{2}}\exp\rbr{-{\nbr{\psi\rbr{s, a} - \nu\rbr{s'}}^2}/{2}}\exp\rbr{{\nbr{\nu\rbr{s'}}^2}/{2}}. 
\end{equation} 
The term $\exp\rbr{-\frac{\nbr{\psi\rbr{s, a} - \nu\rbr{s'}}^2}{2}}$ is the Gaussian kernel, for which we apply the random Fourier feature~\citep{rahimi2007random,dai2014scalable} and obtain the spectral decomposition of~\eqref{eq:factor_ebm},  
\begin{equation}\label{eq:spectral_ebm}
     \PP(s'|s, a) =  \inner{\phi_\omega\rbr{s, a}}{\mu_\omega\rbr{s'}}_{\Ncal\rbr{\omega}},
\end{equation}
where $\omega\sim\mathcal{N}(0, I)$, and
\begin{align}\label{eq:ebm_repr}
    \textstyle
    \phi_\omega\rbr{s, a}& = \exp\rbr{-\ib \omega^\top \psi\rbr{s, a}}\exp\rbr{{\nbr{\psi\rbr{s, a}}^2}/{2} - \log Z\rbr{s, a}}, \\
    \mu_\omega\rbr{s'} &= \exp\rbr{-\ib\omega^\top \nu\rbr{s'}}\exp\rbr{{\nbr{\nu\rbr{s'}}^2}/{2}}.
\end{align}
This bridges the factorized EBMs~\eqref{eq:factor_ebm} to SVD, offering a spectral representation for efficient planning and exploration, as will be shown subsequently.

Exploiting the random feature to connect EBMs to spectral representation for RL was first proposed by~\citet{nachum2021provable} and~\citet{ren2022free} , but only Gaussian dynamics $p(s'|s, a)$ has been considered for its closed-form $Z\rbr{s,a}$ and tractable MLE. Equation~\eqref{eq:factor_ebm} is also discussed in ~\citep{zhang2023energy,ouhamma2023bilinear,zheng2022optimistic} for exploration with UCB-style bonuses. Due to the notorious difficulty in MLE of EBMs caused by the intractability of $Z\rbr{s, a}$~\citep{zhang2022making},
only special cases of~\eqref{eq:factor_ebm} have been practically implemented. How to efficiently exploit the flexibility of general EBMs in practice still remains an open problem.


\paragraph{Representation Learning via Diffusion.} We revisit the EBM understanding of diffusion models, which not only justifies the flexibility of diffusion models, but more importantly, paves the way for efficient learning of spectral representations~\eqref{eq:ebm_repr} through Tweedie's identity for diffusion models.

Given $\rbr{s, a}$, we consider perturbing the samples from dynamics $s'\sim\PP(s'|s, a)$ with Gaussian noise, \ie,
$\PP(\stil'|s'; \beta) = \Ncal\rbr{\sqrt{1 - \beta}s', \beta I}$.
Then, we parametrize the corrupted dynamics as
\begin{equation}\label{eq:corrupted_ebm}
    \PP(\stil'|s, a; \beta) = \int \PP(\stil' | s'; \beta) \PP(s'|s, a)ds' \propto  \exp\rbr{\psi\rbr{s, a}^\top \nu(\stil', \beta) },
\end{equation}
where $\psi\rbr{s, a}$ is shared across all noise levels $\beta$, and 
$\PP(\stil'|s, a; \alpha) \rightarrow \PP(s'|s, a)$ with $\stil' \rightarrow s'$, as $\beta\rightarrow 0$, there is no noise corruption on $s'$. 
%
%
\begin{proposition}[Tweedie's Identity~\citep{efron2011tweedie}]\label{prop:tweedie}
For arbitrary corruption $\PP\rbr{\stil'|s'; \beta}$ and $\beta$ in $\PP\rbr{\stil'|s, a; \beta}$, we have 
\begin{equation}\label{eq:generali_tweedie}
    \nabla_{\stil'} \log \PP(\stil'|s, a; \beta) = \EE_{\PP(s'|\stil',s, a; \beta)}\sbr{\nabla_{\stil'} \log \PP(\stil' | s'; \beta)}.
\end{equation}
\end{proposition}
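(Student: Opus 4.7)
The plan is to derive the identity directly from the definition of the corrupted marginal in~\eqref{eq:corrupted_ebm} via a standard log-derivative (score-function) argument, followed by an application of Bayes' rule. The crucial structural fact I will lean on is that the forward corruption $\PP(\stil'|s';\beta)$ depends on $(s,a)$ only through $s'$, i.e.\ $\stil' \perp (s,a) \mid s'$. This conditional independence is what makes the generalization to the $(s,a)$-conditional setting behave identically to the classical unconditional Tweedie identity.

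First I would write the marginal as $\PP(\stil'|s,a;\beta) = \int \PP(\stil'|s';\beta)\,\PP(s'|s,a)\,ds'$ and apply the score-function identity
\begin{equation*}
\nabla_{\stil'} \log \PP(\stil'|s,a;\beta) \;=\; \frac{\nabla_{\stil'} \PP(\stil'|s,a;\beta)}{\PP(\stil'|s,a;\beta)}.
\end{equation*}
Assuming enough regularity to exchange gradient and integral (e.g.\ $\PP(\stil'|s';\beta)$ is differentiable in $\stil'$ with a dominating integrable bound, which holds for the Gaussian corruption kernel used in~\eqref{eq:corrupted_ebm}), the numerator becomes $\int \nabla_{\stil'}\PP(\stil'|s';\beta)\,\PP(s'|s,a)\,ds'$. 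Multiplying and dividing by $\PP(\stil'|s';\beta)$ inside the integral rewrites this as $\int \PP(\stil'|s';\beta)\,\nabla_{\stil'}\log \PP(\stil'|s';\beta)\,\PP(s'|s,a)\,ds'$.

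Next, I would identify the resulting weights as a posterior via Bayes' rule. Because $\stil'$ is generated from $s'$ alone,
\begin{equation*}
\PP(s'\mid \stil',s,a;\beta) \;=\; \frac{\PP(\stil'\mid s';\beta)\,\PP(s'\mid s,a)}{\PP(\stil'\mid s,a;\beta)},
\end{equation*}
so dividing the previous expression by $\PP(\stil'|s,a;\beta)$ yields exactly $\EE_{\PP(s'|\stil',s,a;\beta)}[\nabla_{\stil'} \log \PP(\stil'|s';\beta)]$, which is the claimed identity in~\eqref{eq:generali_tweedie}.

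The only real obstacle is justifying the gradient-integral interchange, but this is routine under the Gaussian corruption $\PP(\stil'|s';\beta) = \Ncal(\sqrt{1-\beta}\,s',\beta\sigma^2 I)$ assumed in the paper, since the integrand and its $\stil'$-gradient are jointly dominated by integrable functions of $s'$ for any fixed $\stil'$. I would state this regularity assumption explicitly, and note that the proof goes through for any corruption kernel satisfying the same dominated-convergence condition, which is why the statement is phrased for arbitrary $\PP(\stil'|s';\beta)$ and $\beta$.
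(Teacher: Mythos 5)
Your proof is correct and follows essentially the same route as the paper's own verification: the log-derivative trick, differentiation under the integral sign, and Bayes' rule to recognize the posterior weights $\PP(s'|\stil',s,a;\beta)$. The only addition is your explicit justification of the gradient--integral interchange, which the paper leaves implicit as "simple calculation."
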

\vspace{-2mm}
This can be easily verified by simple calculation, \ie, 
\begin{align}
    &\nabla_{\stil'}\log \PP(\stil'|s, a; \beta) = \frac{\nabla_{\stil'} \PP(\stil'|s, a; \beta) }{\PP(\stil'|s, a; \beta) } = \frac{\nabla_{\stil'} \int \PP(\stil' | s'; \beta) \PP(s'|s, a)ds' }{\PP(\stil'|s, a; \beta) } \nonumber\\
    &= \int \frac{\nabla_{\stil'} \log \PP(\stil' | s'; \beta)\PP(\stil' | s'; \beta) \PP(s'|s, a) }{\PP(\stil'|s, a; \beta)} ds' = \EE_{\PP(s'|\stil',s, a; \beta)}\sbr{\nabla_{\stil'} \log \PP(\stil' | s'; \beta)}.
\end{align}

For Gaussian perturbation with~\eqref{eq:corrupted_ebm}, Tweedie's identity~\eqref{eq:generali_tweedie} is applied as
\begin{align}\label{eq:gauss_tweedie}
     \psi\rbr{s, a}^\top \nabla_{\stil'}\nu\rbr{\stil', \beta} &= \EE_{\PP(s'|\stil', s, a; \beta)}\sbr{\frac{\sqrt{1 - \beta} s' - \stil'}{\beta}} \nonumber
     \\
     \Rightarrow \stil' + \beta \psi\rbr{s, a}^\top \nabla_{\stil'}\nu\rbr{\stil', \beta} & = \sqrt{1 - \beta} \EE_{\PP(s'|\stil',s, a; \beta)}\sbr{s'}.
\end{align}
Let $\zeta\rbr{\stil'; \beta} = \nabla_{\stil'}\nu\rbr{\stil', \beta}$, we can learn $\psi\rbr{s, a}$ and $\zeta\rbr{\stil'; \beta}$
by matching both sides of~\eqref{eq:gauss_tweedie}, 
\begin{equation}\label{eq:intermedia}
    \min_{\psi, \zeta} \,\,\EE_\beta\EE_{(s, a, \stil')}\sbr{\nbr{\stil' + \beta \psi\rbr{s, a}^\top \zeta\rbr{\stil', \beta}  - \sqrt{1 - \beta}\EE_{\PP\rbr{s'|\stil', s, a; \beta}}\sbr{ s' }}^2}  
\end{equation}
which shares the same optimum of 
\begin{equation}\label{eq:diff_learn}
     \min_{\psi, \zeta} \,\,\ell_{\text{diff}}\rbr{\psi, \zeta}\defeq \EE_\beta\EE_{(s, a, \stil', s')}\sbr{\nbr{\stil' + \beta \psi\rbr{s, a}^\top \zeta\rbr{\stil', \beta}  - \sqrt{1 - \beta} s'}^2}. 
\end{equation}
The equivalence of~\eqref{eq:intermedia} and~\eqref{eq:diff_learn} is provided in~\appref{appendix:details}.



\begin{algorithm}[t]
\caption{\algname~(\algabb) Training}
\label{alg:training}
\begin{algorithmic}[1]
\State \textbf{Input:} representation networks $\psi, \zeta$, noise levels $\{\beta^k\}_{k=1}^T$, replay buffer $\mathcal{D}$

\For{update step $= 1, 2, ..., N_{\text{rep}}$}
\State Sample a batch of $n$ transitions $\{(s_i, a_i, s_i')\}_{i=1}^n\sim\mathcal{D}$
\State Sample noise schedules for each transition $\{\beta_i\}_{i=1}^n\sim \text{Uniform}(\beta^1, \beta^2, \ldots, \beta^T)$
\State Corrupt the next states $\tilde{s}_i'\leftarrow \sqrt{1-\beta_i}s_i'+\sqrt{\beta_i}\epsilon_i$, where $\epsilon_i\sim\mathcal{N}(0, I)$
\State Optimize $\psi, \zeta$ via gradient descent by minimizing Eq~\eqref{eq:diff_learn}
\EndFor

\State \textbf{Return} $\psi, \zeta$

\end{algorithmic}
\end{algorithm}

\vspace{-2mm}
\paragraph{\algname for $Q$-function.}
The loss described by \eqref{eq:diff_learn} estimates the score function $\psi\rbr{s, a}^\top\zeta\rbr{\stil', \beta}$ for diffusion models. In the context of generating samples, the score function suffices to drive the reverse Markov chain process. However, when deriving the random feature $\phi_\omega$ defined in \eqref{eq:ebm_repr}, the partition function $Z(s, a)$ is indispensable. Furthermore, the random feature $\phi_\omega(s, a)$ is only conceptual with infinite dimensions where $\omega\sim \Ncal\rbr{0, I}$. Next, we will proceed to analyze the structure of $Z(s, a)$ and finally construct the spectral representation with the learned $\psi\rbr{s, a}$.

We first illustrate $Z(s, a)$ is also linearly representable by random features of $\psi\rbr{s, a}$, 
\begin{proposition}\label{prop:partition_repr}
    Denote 
    $\rho_\omega\rbr{s, a}\defeq \exp\rbr{-\ib \omega^\top \psi\rbr{s, a} + {\nbr{\psi\rbr{s, a}}^2}/{2}}$, the partition function is linearly representable by $\rho_\omega\rbr{s, a}$, \ie, $Z\rbr{s, a} = \inner{\rho_\omega\rbr{s, a}}{u}_{\Ncal\rbr{\omega}}$.
\end{proposition}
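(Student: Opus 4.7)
The plan is to mirror the random Fourier feature derivation that produced the spectral decomposition \eqref{eq:spectral_ebm}--\eqref{eq:ebm_repr}, but apply it to the $s'$-integral defining the partition function itself rather than to the transition density. Concretely, starting from $Z(s,a) = \int \exp(\psi(s,a)^\top \nu(s'))\, ds'$, I would use the polarization identity $\psi^\top \nu = \tfrac{1}{2}\|\psi\|^2 + \tfrac{1}{2}\|\nu\|^2 - \tfrac{1}{2}\|\psi - \nu\|^2$ to rewrite the integrand as the product
$\exp(\|\psi(s,a)\|^2/2)\,\exp(-\|\psi(s,a)-\nu(s')\|^2/2)\,\exp(\|\nu(s')\|^2/2)$, exposing a Gaussian kernel that can be expanded in random Fourier features exactly as in the proof of \eqref{eq:spectral_ebm}.

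Next, I would invoke Bochner's theorem, i.e., the Gaussian characteristic function identity $\exp(-\|x\|^2/2) = \mathbb{E}_{\omega\sim\mathcal{N}(0,I)}[\exp(-\ib\omega^\top x)]$, applied with $x = \psi(s,a) - \nu(s')$, which gives
\begin{equation*}
\exp(-\|\psi(s,a)-\nu(s')\|^2/2) = \mathbb{E}_\omega\bigl[\exp(-\ib\omega^\top\psi(s,a))\exp(\ib\omega^\top\nu(s'))\bigr].
\end{equation*}
Substituting back, exchanging the $\omega$-expectation with the $s'$-integral via Fubini--Tonelli, and grouping factors according to their $(s,a)$- or $s'$-dependence, the $(s,a)$-dependent pieces collapse into $\rho_\omega(s,a) = \exp(-\ib\omega^\top\psi(s,a))\exp(\|\psi(s,a)\|^2/2)$ by definition, while the remaining integral
\begin{equation*}
u(\omega) \defeq \int \exp(\ib\omega^\top\nu(s'))\exp(\|\nu(s')\|^2/2)\,ds'
\end{equation*}
is a function of $\omega$ alone, independent of $(s,a)$. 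This would yield precisely $Z(s,a) = \mathbb{E}_\omega[\rho_\omega(s,a)\,u(\omega)] = \langle \rho_\omega(s,a), u\rangle_{\mathcal{N}(\omega)}$, which is the claim.

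The main obstacle is justifying the interchange of integrals and ensuring that $u(\omega)$ is well-defined: although $Z(s,a)$ itself is finite (it normalizes a probability density), the naive Fubini bound requires $\int \exp(\|\nu(s')\|^2/2)\,ds' < \infty$, which is strictly stronger than $Z(s,a)<\infty$. The statement should therefore be read under the same implicit regularity on $\nu$ that makes the random-feature decomposition \eqref{eq:spectral_ebm} itself meaningful---e.g., bounded $\nu$ or an integrability assumption against the state measure. A secondary subtlety is that $\rho_\omega$ and $u$ are complex-valued, so the bracket $\langle \cdot, \cdot\rangle_{\mathcal{N}(\omega)}$ must be read as the complex multiplicative pairing under the Gaussian measure on $\omega$ (no Hermitian conjugate), matching the convention already adopted for $\phi_\omega$ and $\mu_\omega$ in \eqref{eq:ebm_repr}; with this reading the identity holds literally rather than up to complex conjugation.
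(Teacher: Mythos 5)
Your argument is correct and uses the same machinery as the paper---the polarization identity, the random Fourier expansion of the Gaussian kernel, and a Fubini swap identifying $u$ with the $s'$-integral of the next-state feature---so it is essentially the paper's proof unrolled from first principles; the paper itself takes a one-line shortcut, observing that $\phi_\omega\rbr{s,a}=\rho_\omega\rbr{s,a}/Z\rbr{s,a}$ and integrating the already-established decomposition $\PP\rbr{s'|s,a}=\inner{\phi_\omega\rbr{s,a}}{\mu_\omega\rbr{s'}}_{\Ncal\rbr{\omega}}$ over $s'$, so that $\int\PP\rbr{s'|s,a}\,ds'=1$ immediately yields $\inner{\rho_\omega\rbr{s,a}}{\int\mu_\omega\rbr{s'}ds'}=Z\rbr{s,a}$ with $u=\int\mu_\omega\rbr{s'}ds'$. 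Your $u(\omega)$ agrees with this (up to the conjugation convention you correctly flag), and your caveats about the Fubini interchange and the integrability of $\exp\rbr{\nbr{\nu\rbr{s'}}^2/2}$ are legitimate regularity conditions that the paper leaves implicit.
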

\begin{proof}   
    We have
    $\phi_\omega\rbr{s, a} = \frac{\rho_\omega\rbr{s, a}}{Z\rbr{s, a}}$, and 
    $\PP\rbr{s'|s, a} = \inner{\frac{\rho_\omega\rbr{s, a}}{Z\rbr{s, a}}}{\mu_\omega\rbr{s'}}_{\Ncal\rbr{\omega}}$, which implies
    \[
    \resizebox{0.78\hsize}{!}{$
    \int \PP\rbr{s'|s, a} ds' = 1 \Rightarrow \inner{\frac{\rho_\omega\rbr{s, a}}{Z\rbr{s, a}}}{\underbrace{\int \mu_\omega\rbr{s'}ds'}_u} = 1 \Rightarrow \inner{\rho_\omega\rbr{s, a}}{u}_{\mathcal{N}(\omega)} = Z\rbr{s, a}.
    $}
    \]
\end{proof}
\vspace{-6mm}
Plug this into~\eqref{eq:ebm_repr} and~\eqref{eq:exp_q}, we can represent the $Q^\pi$-function for arbitrary $\pi$ as 
\begin{equation}\label{eq:q_repr}
\resizebox{0.9\hsize}{!}{$
    Q^\pi\rbr{s, a} = \inner{\frac{\rho_\omega\rbr{s, a}}{\inner{\rho_\omega\rbr{s, a}}{u}}}{\xi^\pi}_{\mathcal{N}(\omega)}
    = \Bigl\langle\underbrace{\exp\rbr{-\ib\omega^\top\psi\rbr{s, a} - \log \inner{\exp\rbr{-\ib\omega^\top\psi\rbr{s, a}}}{u}_{\mathcal{N}(\omega)}}}_{\varphi_{\omega, u}\rbr{s, a}},\xi^\pi\Bigr\rangle_{\mathcal{N}(\omega)},
    $}
\end{equation}
which eliminates the explicit partition function calculation. 

We thereby construct \emph{\algname~(\algabb)}, a tractable finite-dimensional representation, by approximating $\varphi_{\omega, u}(s, a)$ with some neural network upon $\psi\rbr{s, a}$. 
Specifically, in the definition of $\varphi_{\omega, u}\rbr{s, a}$ in~\eqref{eq:q_repr}, it contains Fourier basis $\exp\rbr{-\ib \omega^\top \psi\rbr{s, a}}$, suggesting the use of trigonometry functions~\citep{rahimi2007random} upon $\psi\rbr{s, a}$, \ie, $\sin\rbr{W_1^\top \psi\rbr{s, a}}$. Meanwhile, it also contains a product with $\frac{1}{\inner{\exp\rbr{-\ib \omega^\top \psi\rbr{s, a}}}{u}_{\mathcal{N}(\omega)}}$, suggesting the additional nonlinearity over $\sin\rbr{W_1^\top \psi\rbr{s, a}}$. Therefore, we consider the finite-dimensional neural network $\phi_{\theta}\rbr{s, a} = \texttt{elu}\rbr{W_2\sin\rbr{W_1^\top \psi\rbr{s, a}}}\in \RR^d$ with learnable  parameters $\theta = \rbr{W_1, W_2}$, to approximate the infinite-dimensional $\varphi_\omega\rbr{s, a}$ with $\omega\sim \Ncal\rbr{0, I}$. 

\paragraph{Remark (Connection to sufficient dimension reduction~\citep{sasaki2018neural}):} The theoretical properties of the factorized potential function in EBMs~\eqref{eq:factor_ebm} have been investigated in~\citep{sasaki2018neural}. Specifically, the factorization is actually capable of universal approximation. Moreover, $\psi\rbr{s, a}$ also constructs an implementation of sufficient dimension reduction~\citep{fukumizu2009kernel}, \ie, given $\psi\rbr{s, a}$, we have $\PP(s'|s, a) = \PP\rbr{s'|\psi\rbr{s, a}}$, or equivalently $s'\perp (s, a)|\psi\rbr{s, a}$. 
These properties justify the expressiveness and sufficiency of the learned $\psi\rbr{s, a}$. However, $\psi\rbr{s, a}$ in~\citep{sasaki2018neural} is only estimated up to the partition function $Z(s, a)$, which makes it not directly applicable for planning and exploration in RL as we discussed.

\subsection{Policy Optimization with Diffusion Spectral Representation}\label{subsec:planning_exploration}
With the approximated finite-dimensional representation $\phi_\theta$, the $Q$-function can be represented as a linear function of $\phi_\theta$ and a weight vector $\xi\in\RR^d$, 
\begin{equation}
\textstyle
Q_{\xi, \theta}(s,a) = \phi_\theta(s,a)^\top \xi \, 
\label{eq:diff-rep-q}
\end{equation}
This approximated value function can be integrated into any model-free algorithm for policy optimization.
In particular, we update $\rbr{\xi, \theta}$ with the standard TD learning objective 
\begin{align}
\ell_{\text{critic}} (\xi, \theta) = \mathbb{E}_{s,a,r,s'\sim\mathcal{D}}\left[ \left( r + \gamma \mathbb{E}_{a'\sim \pi} [ Q_{\bar{\xi}, \bar\theta}(s',a') ]  - Q_{\xi, \theta}(s,a) \right)^2 \right] \, ,
\label{eq:q-update}
\end{align}
where $\pi$ is the learning algorithm's current policy, $(\bar{\xi}, \bar\theta)$ is the target network, $\mathcal{D}$ is the replay buffer. 
We apply the \emph{double Q-network trick} to stabilize training \citep{fujimoto2018addressing}.  
In particular, two weights $(\xi_1, \theta_1), (\xi_2, \theta_2)$ are initialized and updated independently according to \eqref{eq:q-update}. 
Then the policy is updated by considering $\max_{\pi} \mathbb{E}_{s\sim \mathcal{D},a\sim\pi} [ \min_{i\in\{1,2\}} Q_{\xi_i, \theta_i}(s,a)]$.  
Algorithm \ref{alg:online} presents the pseudocode of online RL with \algabb. This learning framework is largely consistent with standard online reinforcement learning (RL) approaches, with the primary distinction being the incorporation of diffusion representations. As more data is collected, Line 7 specifies the update of the diffusion representation $\phi_\theta$ using Algorithm~\ref{alg:training} and the latest buffer $\mathcal{D}$. 

\paragraph{Remark (Exploration with Diffusion Spectral Representation):} 
Following \citep{guo2023provably}, even $\exp\rbr{-\ib\omega^\top \psi\rbr{s, a}}$ is not enough for linearly representing $Q^\pi$, we still can exploit $\exp\rbr{-\ib\omega^\top \psi\rbr{s, a}}$ for bonus calculation to implement the principle of optimism in the face of uncertainty for exploration in RL, \ie, 
\begin{equation}\label{eq:kernel_bonus}
    \textstyle
    b(s,a) =1 - K^\top\rbr{s, a}\rbr{K + \lambda I}^{-1} K\rbr{s, a},
\end{equation}
where $\mathcal{D}$ is a dataset of state action pairs, $k\rbr{(s, a), (s', a')}\defeq \exp\rbr{-\frac{\nbr{\psi\rbr{s, a} - \psi\rbr{s', a'}}^2}{2}}$, $K(s, a) \defeq \sbr{k((s, a), \rbr{s, a}_i)}_{\rbr{s, a}_i\in \Dcal}$, and $K = [K((s, a)_j)]_{(s, a)_j\in \Dcal}$. The bonus~\eqref{eq:kernel_bonus} derivation is straightforward by applying the connection between random feature and kernel, similar to~\citep{zheng2022optimistic}. 
In practice, we can also approximate UCB bonus with \algabb as
\begin{align}
\textstyle
b(s,a) = \phi_{\bar{\theta}}(s,a)^\top \left( \sum_{s',a'\in\mathcal{D}} \phi_{\bar{\theta}}(s,a) \phi_{\bar{\theta}}(s,a)^\top + \lambda I \right)^{-1} \phi_{\bar{\theta}}(s,a). 
\label{eq:bonus}
\end{align}
These bonuses are subsequently added to the reward in~\eqref{eq:q-update} to facilitate exploration.

\paragraph{Remark (Comparison to Spectral Representation):}
Both the proposed \algabb and the existing spectral representation for RL~\citep{zhang2022making,ren2022spectral,ren2022latent, zhang2023energy} are extracting the representation for $Q$-function. However, we emphasize that the major difference lies in that existing spectral representations seek low-rank linear representations, while our representation is sufficient for representing $Q$-function, but in a nonlinear form, as we revealed in~\eqref{eq:q_repr}. This nonlinearity in fact comes from the partition function $Z\rbr{s, a}$, which is constrained to be $1$ in~\citep{zhang2022making, zhang2023energy}, and thus, less flexible for representation. Meanwhile, even with nonlinear representation, it has been shown that the corresponding bonus is still enough for exploration~\citep{guo2023provably}, without additional computational cost.

\begin{algorithm}[t]
\caption{Online RL with \algabb}
\label{alg:online}
\begin{algorithmic}[1]
\State \textbf{Initialize} networks $\pi, (\xi_1, \theta_1), (\xi_2, \theta_2)$, and $\mathcal{D} =\emptyset$ 

\For{timestep $t = 1 \textbf{ to } T$}
  \State $a_t \sim \pi(\cdot | s_t)$
  \State $r_t = r(s_t, a_t),\ s'_t \sim \mathbb{P}(\cdot | s_t, a_t)$
  \State Compute bonus $b(s_t, a_t)$ using \eqref{eq:bonus}  ({\emph{Optional}}) 
  \State $\mathcal{D} \leftarrow \mathcal{D} \cup (s_t, a_t, r_t, s'_t)$
  \State Update $\psi$ with $\mathcal{D}$  by Algorithm~\ref{alg:training} 
  \State Update the \emph{critic} $(\xi_1, \theta_1), (\xi_2, \theta_2)$ by \eqref{eq:q-update}
  \State Update the \emph{policy} $\pi$ by $\max_{\pi} \mathbb{E}_{s\sim \mathcal{D},a\sim\pi} [ \min_{i\in\{1,2\}} Q_{\xi_i, \theta}(s,a)]$
\EndFor

\State \textbf{Return} $\pi$.
\end{algorithmic}
\end{algorithm}

\section{Related Work}\label{sec:related_work}

\textbf{Representation learning in RL. }Learning good abstractions for the raw states and actions based on the structure of the environment dynamics is thought to facilitate policy optimization. To effectively capture the information in said dynamics, existing representation learning methods employ various techniques, such as reconstruction~\citep{recon_rep1,recon_rep2,recon_rep3}, successor features~\citep{sf1,sf2,sf3}, bisimulation~\citep{bisim1, bisim2, bisim3}, contrastive learning~\citep{contrastive_rep1,nachum2021provable}, and spectral decomposition~\citep{spectral_rep1,wu2018laplacian,spectral_rep2,ren2022spectral}. Previous works also leverage the assumption that the transition kernel possesses a low-rank spectral structure, which permits linear representations for the state-action value function and provably sample-efficient reinforcement learning~\citep{jin2020provably,linear_mdp1,linear_mdp2,uehara2022provably}. Based on this, there have been several attempts towards both practical and theoretically grounded reinforcement learning algorithms by extracting the spectral representations from the transition kernel~\citep{ren2022free,zhang2022making,ren2022latent,zhang2023provable}. Our approach aligns with this paradigm but distinguishes itself by learning these representations via diffusion and enjoying the flexibility of energy-based modeling. 

\textbf{Diffusion model for RL. } By virtue of their ability to model complex and multimodal distributions, diffusion models present themselves as well-suited candidates for specific components in reinforcement learning.
For example, diffusion models can be utilized to synthesize complex behaviors~\citep{janner2022planning,ajay2022conditional,chi2023diffusion,du2024learning}, represent multimodal policies~\citep{wang2022diffusion,hansen2023idql,chen2023offline}, or provide behavior regularizations~\citep{chen2023score}. Another line of research utilizes the diffusion model as the world model. Among them, DWM~\citep{ding2024diffusion} and SynthER~\citep{lu2023synthetic} train a diffusion model with off-policy dataset and augment the training dataset with synthesized data, while PolyGRAD~\citep{rigter2023world} and PGD~\citep{jackson2024policy} sample from the diffusion model with policy guidance to generate near on-policy trajectory. On a larger scale, UniSim~\citep{yang2023learning} employs a video diffusion model to learn a real-world simulator that accommodates instructions in various modalities. All of these methods incur great computational costs because they all involve iteratively sampling from the diffusion model to generate actions or trajectories. In contrast, our method leverages the capabilities of diffusion models from the perspective of representation learning, thus circumventing the generation costs.

\section{Experiments}\label{sec:exp}




We evaluate our method with state-based MDP tasks (Gym-MuJoCo locomotion \citep{todorov2012mujoco}) and image-based POMDP tasks (Meta-World Benchmark \citep{yu2020meta}) in this section. Besides, we also provide experiments with state-based POMDP tasks in {Appendix}~\ref{sec:pomdp_appendix}. Our code is publicly released at the \href{https://haotiansun14.github.io/rl-rep-page}{project website}.


\renewcommand{\arraystretch}{1.3}

{

\begin{table}[t]
\centering
\caption{Performances of \algabb and baseline RL algorithms after 200K environment steps. Results are averaged across 4 random seeds and a window size of 10K steps. Results marked with * are taken from MBBL~\citep{wang2019benchmarking} and $\dagger$ are taken from LV-Rep~\citep{ren2022latent}.}
\label{tab:mdp-table}
\vspace{3mm}
\resizebox{\textwidth}{!}{%
\begin{tabular}{lllllll}
\toprule
 &
   &
  HalfCheetah &
  Reacher &
  Humanoid-ET &
  Pendulum &
  I-Pendulum \\ \hline
\multirow{5}{*}{Model-Based RL} &
  ME-TRPO* &
  $2283.7 \pm 900.4$ &
  $-13.4 \pm 5.2$ &
  $72.9 \pm 8.9$ &
  $\mathbf{1 7 7 . 3} \pm \mathbf{1 . 9}$ &
  $-126.2 \pm 86.6$ \\
 &
  PETS-RS* &
  $966.9 \pm 471.6$ &
  $-40.1 \pm 6.9$ &
  $109.6 \pm 102.6$ &
  $167.9 \pm 35.8$ &
  $-12.1 \pm 25.1$ \\
 &
  PETS-CEM* &
  $2795.3 \pm 879.9$ &
  $-12.3 \pm 5.2$ &
  $110.8 \pm 90.1$ &
  $167.4 \pm 53.0$ &
  $-20.5 \pm 28.9$ \\
 &
  Best MBBL* &
  $3639.0 \pm 1135.8$ &
  $\mathbf{-4.1 \pm 0.1}$ &
  $1377.0 \pm 150.4$ &
  $\mathbf{177.3 \pm 1.9}$ &
  $\mathbf{0 . 0} \pm \mathbf{0 . 0}$ \\
 &
  PolyGRAD &
  $2563.5 \pm 204.2$ &
  $-20.7 \pm 1.9$ &
  $1026.6 \pm 58.7$ &
  $166.3 \pm 6.3$ &
  $-3.5 \pm 4.8$ \\ \hline
\multirow{3}{*}{Model-Free RL}&
  $\mathrm{PPO}$* &
  $17.2 \pm 84.4$ &
  $-17.2 \pm 0.9$ &
  $451.4 \pm 39.1$ &
  $163.4 \pm 8.0$ &
  $-40.8 \pm 21.0$ \\
 &
  TRPO* &
  $-12.0 \pm 85.5$ &
  $-10.1 \pm 0.6$ &
  $289.8 \pm 5.2$ &
  $166.7 \pm 7.3$ &
  $-27.6 \pm 15.8$ \\
 &
  $\mathrm{SAC}$* (3-layer) &
  $4000.7 \pm 202.1$ &
  $-6.4 \pm 0.5$ &
  $\mathbf{1794.4 \pm 458.3}$ &
  $168.2 \pm 9.5$ &
  $-0.2 \pm 0.1$ \\ \hline
\multirow{4}{*}{Representation RL}&
  DeepSF$\dagger$ &
  $4180.4 \pm 113.8$ &
  $-16.8 \pm 3.6$ &
  $168.6 \pm 5.1$ &
  $168.6 \pm 5.1$ &
  $-0.2 \pm 0.3$ \\
 &
  SPEDE$\dagger$ &
  $4210.3 \pm 92.6$ &
  $-7.2 \pm 1.1$ &
  $886.9 \pm 95.2$ &
  $169.5 \pm 0.6$ &
  $0.0 \pm 0.0$ \\
 &
  LV-Rep$\dagger$ &
  $5557.6 \pm 439.5$ &
  $-5.8 \pm 0.3$ &
  $1086 \pm 278.2$ &
  $167.1 \pm 3.1$ &
  $\mathbf{0 . 0} \pm \mathbf{0 . 0}$ \\
 &
  \textbf{\algabb} &
  $\mathbf{7223.53 \pm 437.1}$ &
  $-6.5 \pm 2.0$ &
  $821.3 \pm 118.9$ &
  $162.3 \pm 3.0$ &
  $\mathbf{0.0 \pm 0.1}$ \\ \toprule
 &
   &
  Ant-ET &
  Hopper-ET &
  S-Humanoid-ET &
  CartPole &
  Walker-ET \\ \hline
\multirow{5}{*}{Model-Based RL} &
  ME-TRPO* &
  $42.6 \pm 21.1$ &
  $1272.5 \pm 500.9$ &
  $-154.9 \pm 534.3$ &
  $160.1 \pm 69.1$ &
  $-1609.3 \pm 657.5$ \\
 &
  PETS-RS* &
  $130.0 \pm 148.1$ &
  $205.8 \pm 36.5$ &
  $320.7 \pm 182.2$ &
  $195.0 \pm 28.0$ &
  $312.5 \pm 493.4$ \\
 &
  PETS-CEM* &
  $81.6 \pm 145.8$ &
  $129.3 \pm 36.0$ &
  $355.1 \pm 157.1$ &
  $195.5 \pm 3.0$ &
  $260.2 \pm 536.9$ \\
 &
  Best MBBL* &
  $275.4 \pm 309.1$ &
  $1272.5 \pm 500.9$ &
  $\mathbf{1084.3 \pm 77.0}$ &
  $\mathbf{2 0 0 . 0} \pm \mathbf{0 . 0}$ &
  $312.5 \pm 493.4$ \\
 &
  PolyGRAD &
  $433.6 \pm 158.6$ &
  $1151.6 \pm 182.3$ &
  $\mathbf{1110.1 \pm 181.8}$ &
  $193.4 \pm 11.5$ &
  $268.4 \pm 77.3$ \\ \hline
\multirow{3}{*}{Model-Free RL} &
  $\mathrm{PPO}$* &
  $80.1 \pm 17.3$ &
  $758.0 \pm 62.0$ &
  $454.3 \pm 36.7$ &
  $86.5 \pm 7.8$ &
  $306.1 \pm 17.2$ \\
 &
  TRPO* &
  $116.8 \pm 47.3$ &
  $237.4 \pm 33.5$ &
  $281.3 \pm 10.9$ &
  $47.3 \pm 15.7$ &
  $229.5 \pm 27.1$ \\
 &
  $\mathrm{SAC}$* (3-layer) &
  $2012.7 \pm 571.3$ &
  $1815.5 \pm 655.1$ &
  $834.6 \pm 313.1$ &
  $\mathbf{199.4 \pm 0.4}$ &
  $2216.4 \pm 678.7$ \\ \hline
\multirow{4}{*}{Representation RL} &
  DeepSF$\dagger$ &
  $768.1 \pm 44.1$ &
  $548.9 \pm 253.3$ &
  $533.8 \pm 154.9$ &
  $194.5 \pm 5.8$ &
  $165.6 \pm 127.9$ \\
 &
  SPEDE$\dagger$ &
  $806.2 \pm 60.2$ &
  $732.2 \pm 263.9$ &
  $986.4 \pm 154.7$ &
  $138.2 \pm 39.5$ &
  $501.6 \pm 204.0$ \\
 &
  LV-Rep$\dagger$ &
  $2511. 8 \pm 460.0$ &
  $2204.8 \pm 496.0$ &
  $963.1 \pm 45.1$ &
  $\mathbf{200.7 \pm 0.2}$ &
  $2523.5 \pm 333.9$ \\
 &
  \textbf{\algabb} &
  $\mathbf{4788.6 \pm 623.1}$ &
  $\mathbf{2800.5 \pm 95.4}$ &
  $\mathbf{1160.3 \pm 150.1}$ &
  $\mathbf{199.9 \pm 1.1}$ &
  $\mathbf{3722.2 \pm 406.1}$ \\ \toprule
\end{tabular}
}
\end{table}

\begin{figure}[htbp]
  \centering
  \includegraphics[width=\textwidth]{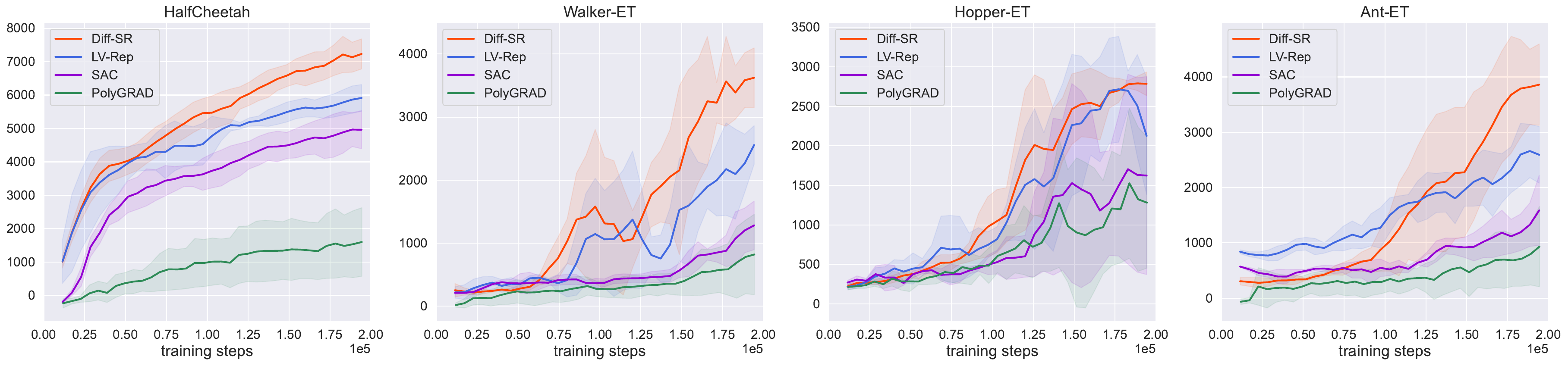} 
  \caption{The performance curves of the \algabb and baseline methods on MBBL tasks. We report the mean (solid line) and one standard deviation (shaded area) across 4 random seeds.}
  \label{fig:mdp_performance} 
\end{figure}

}



\subsection{Results of Gym-MuJoCo Tasks}
In this first set of experiments, we compare \algabb against both model-based and model-free baseline algorithms, using the implementations provided by MBBL~\citep{wang2019benchmarking}. For representation-based RL algorithms, we include LV-Rep \citep{ren2022latent}, SPEDE \citep{ren2022free} and Deep Successor Feature (DeepSF) \citep{sf3} as baselines. Note that the SPEDE is a special case of Gaussian EBM. 
We also include PolyGRAD ~\citep{rigter2023world}, a recent method that utilizes diffusion models for RL as an additional baseline. 
All algorithms are executed for 200K environment steps and we report the mean and standard deviation of performances across 4 random seeds.
More implementation details including the hyper-parameters can be found in Appendix~\ref{appendix:implementation}. 

Table~\ref{tab:mdp-table} presents the results, demonstrating that \algabb achieves significantly better or comparable performance to all baseline methods in most tasks except for Humanoid-ET. 
Specifically, in Ant and Walker, \algabb outperforms the second highest baseline LV-Rep by 90\% and 48\%. 
Moreover, \algabb consistently surpasses PolyGRAD in nearly all environments.
We also provide the learning curves of \algabb and baseline methods in Figure~\ref{fig:mdp_performance} for an illustrative interpretation of the sample efficiency \algabb brings.

\begin{wrapfigure}{r}{0.4\textwidth} 
\vspace{-10mm}
\includegraphics[width=1.0\linewidth]{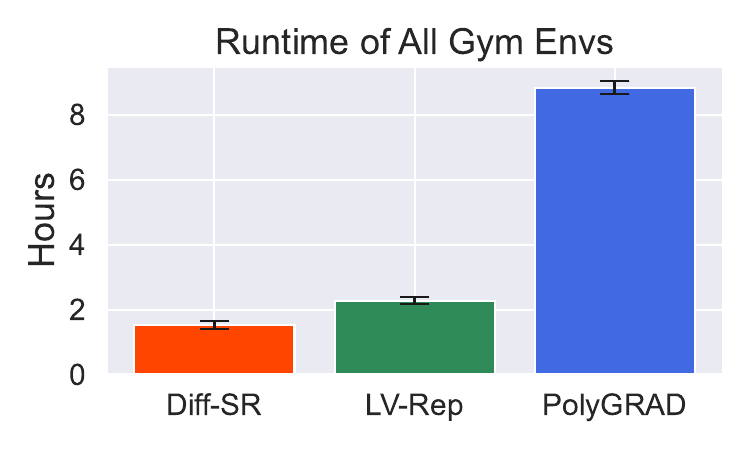} 
\vspace{-4mm}
  \caption{{Runtime comparison between \algabb vs. LV-Rep vs. diffusion-based RL (PolyGRAD).}}
  \label{fig:wall_clock_mdp} 
\vspace{-10pt}
\end{wrapfigure}

\paragraph{Computational Efficiency and Runtime Comparison}\label{subsec:compute}
Compared to other diffusion-based RL algorithms, \algabb harnesses diffusion's flexibility while circumventing the time-consuming sampling process. To showcase this, we record the runtime of {\algabb} and PolyGRAD on MBBL tasks using workstations equipped with Quadro RTX 6000 cards. Results in Figure~\ref{fig:wall_clock_mdp} illustrate that \algabb is about $4 \times$ faster than PolyGRAD, and such advantage is consistent across all environments. We provide a per-task breakdown of the runtime results in Appendix~\ref{fig:mdp_wall_clock_appendix} due to space constraints.

\subsection{Results of Meta-World Tasks}

As the most difficult setting, we evaluate \algabb with 8 visual-input tasks selected from the Meta-World Benchmark. Rather than directly diffuse over the space of raw pixels, we resort to techniques similar to the Latent Diffusion Model (LDM)~\citep{ldm} which first encodes the raw observation image to a 1-D compact latent vector and afterward performs the diffusion process over the latent space.  
To deal with the partial observability, we truncate the history frames with length $L$, encode these $L$ frames into their latent embeddings, concatenate them together, and treat them as the state of the agent. The learning of diffusion representation thus translates into predicting the next frame's latent embedding with the action and $L$-step concatenated embedding. Following existing practices \citep{zhang2023provable}, we set $L=3$ for all Meta-World tasks. We use DrQ-V2~\citep{yarats2021mastering} as the base RL algorithm, and more details of the implementations are deferred to Appendix~\ref{sec:pomdp_img_appendix}. 

The performance curves of \algabb and the baseline algorithms are presented in Figure~\ref{fig:metaworld_results}. We see that \algabb achieves a greater than $90 \%$ success rate for seven of the tasks, 4 more tasks than the second best baseline $\mu$LV-Rep. Overall, \algabb exhibits superior performance, faster convergence speed, and stable optimization in most of the tasks compared to the baseline methods. 
Finally, although \algabb does not require sample generation, we present the reconstruction results to validate the efficacy of the score function $\psi\rbr{s, a}^\top\zeta\rbr{\stil', \beta}$ in Figure~\ref{fig:generation_result}.

\begin{figure}
    \centering
    \includegraphics[width=1.0\linewidth]{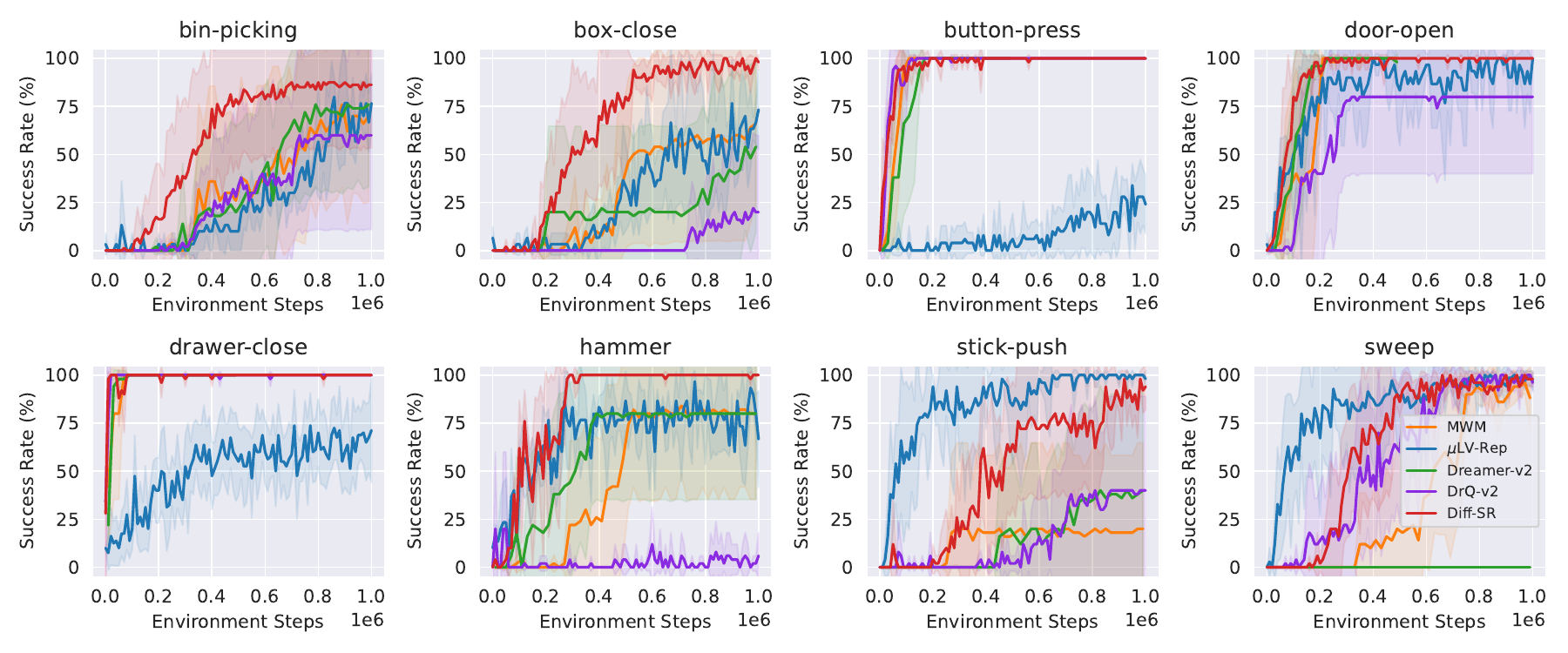}
    \caption{Performance curves for image-based POMDP tasks from Meta-World. We report the mean (solid line) and the standard deviation (shaded area) of performances across 5 random seeds.}
    \label{fig:metaworld_results}
\end{figure}

\section{Conclusions and Discussions}\label{sec:conc}

We introduce \algabb, a novel algorithmic framework designed to leverage diffusion models for reinforcement learning from a representation learning perspective. The primary contribution of our work lies in exploiting the connection between diffusion models and energy-based models, thereby enabling the extraction of spectral representations of the transition function. We demonstrate that such diffusion-based representations can sufficiently express the value function of any policy, facilitating efficient planning and exploration while mitigating the high inference costs typically associated with diffusion-based methods. 
Empirically, we conduct comprehensive studies to validate the effectiveness of \algabb in both fully and partially observable sequential decision-making problems. Our results underscore the robustness and advantages of \algabb across various benchmarks. 
However, the main limitation of this study is that \algabb has not yet been evaluated with real-world and multi-task data. Future work will focus on testing \algabb on real-world applications, such as robotic control.

\newpage
\clearpage
\bibliography{main}
\bibliographystyle{plainnat}

\newpage
\clearpage

\appendix




\section{Detailed Derivations}\label{appendix:details}

\begin{proposition}
    Equation~\eqref{eq:intermedia} shares the same optimal solutions with Equation~\eqref{eq:diff_learn}. 
\end{proposition}
\begin{proof}
    Denote $b = \sqrt{1 - \beta}\EE_{\PP\rbr{s'|\stil', s, a; \beta}}\sbr{ s' } $ for simplicity, and we have
    \begin{eqnarray*}
        &&\EE_\beta\EE_{(s, a, \stil', s')}\sbr{\nbr{\stil' + \beta \psi\rbr{s, a}^\top \zeta\rbr{\stil', \beta}  - \sqrt{1 - \beta}{ s' }}^2}\\
        & = & \EE_\beta\EE_{(s, a, \stil', s')}\sbr{\nbr{\stil' + \beta \psi\rbr{s, a}^\top \zeta\rbr{\stil', \beta} - b + b  - \sqrt{1 - \beta} s' }^2}\\
        & = & \EE_\beta\EE_{(s, a, \stil', s')}\sbr{\nbr{\stil' + \beta \psi\rbr{s, a}^\top \zeta\rbr{\stil', \beta} - b}^2} \\
        &+& 2  \underbrace{\EE_\beta\EE_{(s, a, \stil', s')}\sbr{\rbr{\stil' + \beta \psi\rbr{s, a}^\top \zeta\rbr{\stil', \beta} - b}^\top\rbr{ b - \sqrt{1- \beta}s'}}}_{0}\\
        &+& \underbrace{\rbr{1 - \beta}\EE_\beta\EE_{(s, a, \stil', s')}\sbr{\nbr{s' - \EE_{\PP\rbr{s'|\stil', s, a; \beta}}\sbr{ s' }}^2}}_{\texttt{constant}}.
    \end{eqnarray*}
    The second term equals to $0$ comes from the definition of $b$. Moreover, since $b$ is independent w.r.t. $s'$, we have

    \begin{equation*}
    \resizebox{\hsize}{!}{$\EE_\beta\EE_{(s, a, \stil', s')}\sbr{\nbr{\stil' + \beta \psi\rbr{s, a}^\top \zeta\rbr{\stil', \beta} - b}^2}  = \EE_\beta\EE_{(s, a, \stil')}\sbr{\nbr{\stil' + \beta \psi\rbr{s, a}^\top \zeta\rbr{\stil', \beta} - b}^2}.$}
    \end{equation*}
    
    We conclude that 
    \begin{align*}
        \EE_\beta\EE_{(s, a, \stil', s')}\sbr{\nbr{\stil' + \beta \psi\rbr{s, a}^\top \zeta\rbr{\stil', \beta}  - \sqrt{1 - \beta}{ s' }}^2} \\
        = \EE_\beta\EE_{(s, a, \stil')}\sbr{\nbr{\stil' + \beta \psi\rbr{s, a}^\top \zeta\rbr{\stil', \beta} - b}^2} + \texttt{constant}. 
    \end{align*}
    Therefore, we obtain the claim. 
\end{proof}

\section{Generalization for Partially Observable RL}\label{appendix:pomdp}

\newcommand{\OO}{\mathbb{O}}

In this section we discuss how to generalize \algabb to Partially Observable MDP (POMDP). 
We follow the definition of a POMDP given in~\citep{efroni2022provable},
which is formally denoted as a tuple $\mathcal{P} = (\mathcal{S}, \mathcal{A}, \mathcal{O}, r, H, \rho_0, \PP, \OO)$, where $\mathcal{S}$ is the state space, $\mathcal{A}$ is the action space, and $\mathcal{O}$ is the observation space. The positive integer $H$ denotes the horizon length, {\color{black} $\rho_0$ is the initial state distribution,} $r: \mathcal{O} \times \mathcal{A} \to [0, 1]$ is the reward function, $\PP(\cdot | s, a): \mathcal{S} \times \mathcal{A} \to \Delta(\mathcal{S})$ is the transition kernel capturing dynamics over latent states, and $\OO(\cdot | s): \mathcal{S} \to \Delta(\mathcal{O})$ is the emission kernel, which induces an observation from a given state.

The agent starts at a state $s_0$ drawn from $\rho_0(s)$. At each step $h$, the agent selects an action $a$ from $\mathcal{A}$. This leads to the generation of a new state $s_{h+1}$ following the distribution $\PP(\cdot | s_h, a_h)$, from which the agent observes $o_{h+1}$ according to $\OO(\cdot | s_{h+1})$. The agent also receives a reward $r(o _{h+1}, a_{h+1})$. Observing $o$ instead of the true state $s$ leads to a non-Markovian transition between observations, which means we need to consider policies
$\pi_h: \mathcal{O} \times (\mathcal{A} \times \mathcal{O})^{h} \to \Delta(\mathcal{A})$ that depend on the entire history, denoted by $\tau_h = \{o_0, a_0, \cdots, o_h\}$. 
Let $[H]\defeq \cbr{0,\ldots, H}$.

The value functions are defined by 
\begin{align}
 V_h^\pi(b_h) = \mathbb{E}\left[\sum_{t=h}^{H} r(o_t, a_t)|b_h\right],\, Q_h^\pi(b_h, a_h) = r(o_h, a_h) + \mathbb{E}_{\PP_b}\left[V_{h+1}^\pi(b_{h+1})\right].
\end{align}

\begin{definition}[$L$-decodability~\citep{efroni2022provable}]
\label{assump:decodable}
    $\forall h\in [H]$, define 
    \vspace{-1mm}
    \begin{align}\label{eq:decodable}
        & x_h \in \mathcal{X} := (\mathcal{O}\times \mathcal{A})^{L-1}\times \mathcal{O}, \nonumber \\
        & x_h = (o_{h-L+1}, a_{h-L+1}, \cdots, o_{h}).
    \end{align} 
    A POMDP is $L$-decodable if there exists a decoder $p^*:\mathcal{X}\to \Delta(\mathcal{S})$ such that $p^*(x_h) = b(\tau_h)$.
\end{definition}

That is, under $L$-decodability assumption, it is sufficient to recover the belief state by an $L$-step memory $x_h$ rather than the entire history $\tau_h$. 
This implies that we can parameterize the Q-value as a function of the observation history $Q_h^\pi(x_h, a_h)$, rather that the unknown belief state $Q_h^\pi(b_h(x_h), a_h)$. 
By exploiting this $L$-decodability, \citet{zhang2023provable} propose a probable efficient linear function approximation of $Q_h^\pi(x_h, a_h)$ by considering a $L$-step prediction $\PP^\pi(x_{h+L} | x_h, a_h)$.  

Inspired by this, we apply EBM for  $\PP^\pi(x_{h+L} | x_h, a_h)$, 
\begin{align}
    \PP^\pi(x_{h+L}|x_h, a_h) & = \exp\rbr{\psi(x_h, a_h)^\top \nu\rbr{x_{h+L}} - \log Z\rbr{x_h, a_h}}, 
    \\
     Z\rbr{x_h, a_h}&  = \int \exp\rbr{\psi(x_h, a_h)^\top \nu\rbr{x_{h+L}}} \dif x_{h+L}\, .
\end{align} 
We then apply the techniques presented in \secref{subsec:linear_repr} to learn diffusion representation $\phi(x_h, a_h)\in \mathbb{R}^d$ as an approximation to the random features  
\begin{align}
    \phi_\omega\rbr{x_h, a_h}& = \exp\rbr{-\ib \omega^\top \psi\rbr{x_h, a_h}}\exp\rbr{{\nbr{\psi\rbr{x_h, a_h}}^2}/{2} - \log Z\rbr{x_h, a_h}}\, .
\end{align}
The learned representation is subsequently utilized  to parameterize the value function $Q_h^\pi(x_h, a_h)$ for policy optimization as demonstrated by \citep{zhang2023provable}. 
In particular, we consider value function approximation $Q_{\xi, \theta}(x_h, a_h) = \phi_\theta(x_h, a_h)^\top \xi$ and update it by 
\begin{align}
\ell_{\text{critic}} (\xi) = \mathbb{E}_{x,a,r,x'\sim\mathcal{D}}\left[ \left( r + \gamma \mathbb{E}_{a'\sim \pi} [ Q_{\bar{\xi}, \theta}(x',a') ]  - Q_{\xi, \theta}(x,a) \right)^2 \right] \, ,
\end{align}
The policy, now conditioned on the $L$-step history $x$, is updated by $\max_{\pi} \mathbb{E}_{x\sim \mathcal{D},a\sim\pi} [ \min_{i\in\{1,2\}} Q_{\xi_i, \theta}(x,a)]$. 
We refer interested readers to \citep{zhang2023provable} for more details on representation learning in POMDPs.

\section{Details and Analysis for Fully Observable MDP Experiments}\label{appendix:implementation}

\begin{table}[h!]
\caption{Hyperparameters used for \algabb in state-based MDP environments.}
\label{tab:architecture}
\centering
\begin{tabular}{ll}
\hline
Hyperparameter  & Value       \\ 
\hline
Actor Learning Rate                       & 0.003                      \\
Critic Learning Rate & 0.0003 \\
Learning Rate for $\psi, \zeta, \theta$                  & 0.0001                     \\
Actor Hidden Layer Dimensions & (256, 256) \\
\algabb Representation Dimension & 256                        \\
Discount factor $\gamma$       & 0.99                       \\
Critic Soft Update Factor $\tau$              & 0.005                      \\
Batch Size                     & 1024                       \\
Number of Noise Levels          & 1000                       \\
$\psi$ Network Width                  & 256 \\ 
$\psi$ Network Hidden Depth                   & 1 \\ 
$\zeta$ Network Width              & 512 \\
$\zeta$ Network Hidden Depth             & 1 \\

\hline

\end{tabular}
\end{table}

\subsection{Baseline Methods}

For baseline methods, we include ME-TRPO~\citep{kurutach2018modelensemble}, PETS~\citep{NEURIPS2018_3de568f8}, and the best model-based results among~\citet{luo2021algorithmic,DeisenrothR11, NIPS2015_14851003, pmlr-v87-clavera18a, 8463189, 6386025} and \citet{NIPS2014_6766aa27} from MBBL~\citep{wang2019benchmarking}. For model-free algorithms, we include PPO~\citep{schulman2017proximal}, TRPO~\citep{pmlr-v37-schulman15} and SAC~\citep{pmlr-v80-haarnoja18b}. 

\subsection{Experiment Setups}


We implemented our algorithm based on Soft Actor-Critic (SAC). We use \textit{feature update ratio} to denote the frequency of updating the diffusion representations as compared to critic updates. We sweep the value of this parameter within [1, 3, 5, 10, 20] for all MuJoCo experiments and report the configuration that achieved the best results. Other hyper-parameters are listed in Table~\ref{tab:architecture}. For evaluation, we test all methods every 5,000 environment steps by simulating 10 episodes and recording the cumulative return. The reported results are the average return over the last four evaluations and four random seeds.

\section{Computational Efficiency and Runtime Comparison}
Full results are presented in Figure~\ref{fig:mdp_wall_clock_appendix}. We use exactly the same experimental setups as used in Table~\ref{tab:mdp-table}, which has been described in~\ref{subsec:compute}. We observe that in all cases, {\algabb} is about $4\times$ faster than PolyGRAD. Such efficiency can be attributed to the fact that while we utilize diffusion to train the representations, we do not have to sample from the diffusion model iteratively, which is the main computational bottleneck for SOTA diffusion-based RL algorithms.

\begin{figure}[h!]
  \centering
  \includegraphics[width=\textwidth]{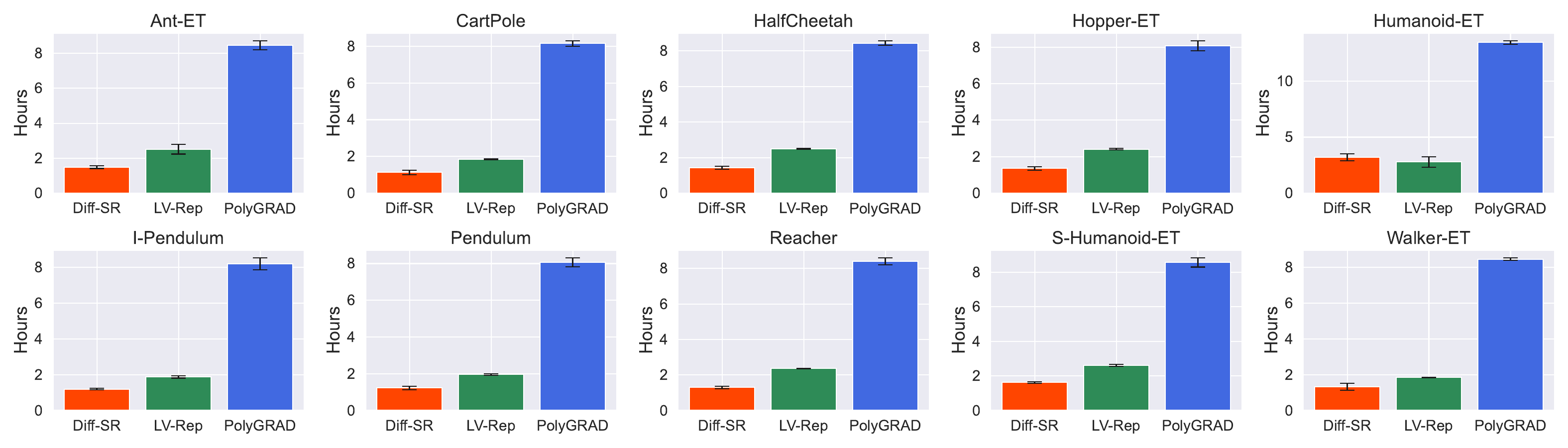} 
  \caption{Per-task runtime of {\algabb}, LV-Rep and PolyGRAD on tasks from MBBL. }
  \label{fig:mdp_wall_clock_appendix} 
\end{figure}

\section{Details and Analysis for State-based Partially Observable MDP Experiments}\label{sec:pomdp_appendix}

\begin{table}
\centering
\caption{Performance on various continuous control problems with partial observation. We average results across 4 random seeds and a window size of 10K. 
{\algabb} achieves a similar or better performance compared to the baselines. Here, Best-FO denotes the performance of {\algabb} using full observations as inputs, providing a reference on how well an algorithm can achieve most in our tests. \normalsize }

\label{tab:MuJoCo_results_POMDP}
\centering
\resizebox{\textwidth}{!}{%
\begin{tabular}{lllllll}
\toprule
& HalfCheetah & Humanoid & Walker & Ant & Hopper & Pendulum\\ 
\midrule  
{\bf \algabb} & \textbf{3864.2 $\pm$ 482.3} & 650.1 $\pm$ 57.4 & \textbf{1860.5 $\pm$ 912.1} &\textbf{3189.7 $\pm$ 720.7} & \textbf{1357.6 $\pm$ 506.4} & \textbf{167.4 $\pm$ 4.4} \\
{$\mu$LV-Rep} & 3596.2 $\pm$ 874.5 & \textbf{806.7 $\pm$ 120.7} & 1298.1$\pm$ 276.3 &1621.4 $\pm$ 472.3 & 1096.4 $\pm$ 130.4 & 168.2 $\pm$ 5.3 \\
{Dreamer-v2} & 2863.8 $\pm$ 386 & 672.5 $\pm$ 36.6 & 1305.8 $\pm$ 234.2 & 1252.1 $\pm$ 284.2 & 758.3 $\pm$ 115.8 & \textbf{172.3 $\pm$ 8.0}\\
SAC-MLP & 1612.0 $\pm$ 223 & 242.1 $\pm$ 43.6 & 736.5 $\pm$ 65.6 & 1612.0 $\pm$ 223 & 614.15 $\pm$ 67.6 & 163.6 $\pm$ 9.3\\
SLAC & 3012.4 $\pm$ 724.6 & 387.4 $\pm$ 69.2 & 536.5 $\pm$ 123.2 & {1134.8 $\pm$ 326.2} & 739.3 $\pm$ 98.2 & 167.3 $\pm$ 11.2\\
PSR & 2679.75 $\pm$386 & 534.4 $\pm$ 36.6 & 862.4 $\pm$ 355.3 & {1128.3 $\pm$ 166.6} & 818.8 $\pm$ 87.2 & 159.4 $\pm$ 9.2\\
PolyGRAD & 987.1 $\pm$ 374.6 & 764.5  $\pm$ 91.2 & 211.4 $\pm$ 100.4 & 493.0 $\pm$ 95.4 & 527.8 $\pm$ 97.7 & 174.0 $\pm$ 6.1 \\
\midrule
Best-FO & 7223.5$\pm$437.15 & 823.1$\pm$118.9 & 3722.2$\pm$406.1 &  4788.6$\pm$623.1 & 2800.5$\pm$95.4 &  162.3$\pm$3.0\\
\bottomrule 
\end{tabular}
}
\end{table}

\subsection{Implementation Details}

We also experiment \algabb with state-based Partially Observable MDP (POMDP) tasks. We contruct a partially observable variant based on OpenAI gym MuJoco~\citep{todorov2012mujoco}. Adhereing to standard methodology, we mask the velocity components within the observations presented to the agent, effectively rendering the tasks partially observable \citep{ni2021recurrent, weigand2021reinforcement, gangwani2020learning}. Under this masking scheme, a single observation is insufficient for decision-making. Therefore, the agent must aggregate past observations to infer the missing information and select appropriate actions. The reconstruction of missing information can be done by aggregating a past history of length $L$, as demonstrated in Definition~\ref{assump:decodable}. 

The POMDP experiment follows a similar setup to the fully observable one described in Appendix~\ref{appendix:implementation}. In the partially observable setting, velocity information is masked, and we concatenate the past $L = 3$ observations follow \citep{zhang2023provable}. The universal hyperparameters used for POMDP are listed in Table~\ref{tab:pomdp-arch}. For each individual environment, we explored an array of parameters and chose the highest-performing configuration. Specifically, the critic and actor learning rates were varied across $\{0.0015, 0.00015\}$, the model learning rate was varied across $\{0.0001, 0.0003, 0.0008\}$ and the feature update ratio was varied across $\{1,3,5,10,20\}$.

We evaluate six baselines in our experiments: a diffusion approach, PolyGRAD~\citep{rigter2023world}, two model-based approaches, Dreamer~\citep{dreamerv1, dreamerv2} and Stochastic Latent Actor-Critic (SLAC)~\citep{lee2020stochastic}, a model-free baseline, SAC-MLP, that concatenates history sequences (past four observations) as input to an MLP layer for both the critic and policy, and the neural PSR~\citep{guo2018neural}. We also compared to a representation-based baseline, $\mu$LV-Rep~\citep{zhang2023provable}. All methods are evaluated using the same procedure as in the fully observable setting. As a reference, we also provide the best performance achieved in the fully observable setting (without velocity masking), denoted as Best-FO, which serves as a benchmark for the optimal result an algorithm can achieve in our tests.

\subsection{Results and Analysis}

Table~\ref{tab:MuJoCo_results_POMDP} presents all experiment results, showing effectiveness of \algabb in partially observable continuous control tasks. The proposed method delivers superior results in 4 out of 6 tasks (HalfCheetah, Walker, Ant, Hopper). It significantly outperforms other algorithm in Walker and Ant, and achieved a comparable result with the lowest standard deviation on Pendulum, indicating consistent performance.

The wall time comparison between \algabb and PolyGRAD is shown in Figure~\ref{fig:pomdp_wall_clock_appendix}. In the Humanoid task, \algabb is approximately 3 times faster than PolyGRAD, whereas in the other tasks, it is about 4 times faster.

\begin{figure}[h!]
  \centering
  \includegraphics[width=0.8\textwidth]{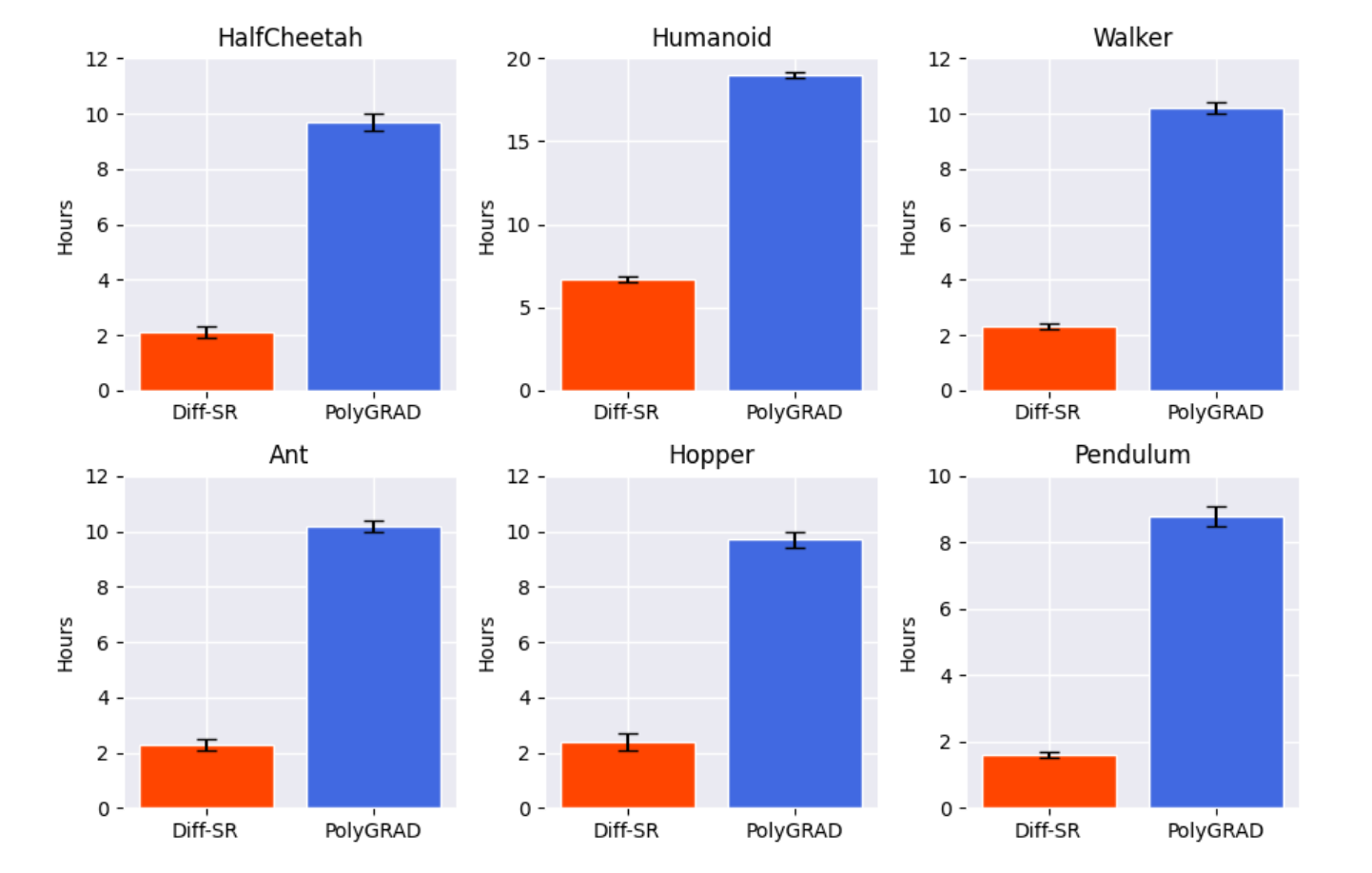} 
  \caption{Per-task running time of {\algabb} and PolyGRAD on tasks from MBBL with partial observation.}
  \label{fig:pomdp_wall_clock_appendix} 
\end{figure}

\subsection{Ablations And Modifications}

\paragraph{Masking Observations} Observations consist of both velocities and positions. Masking velocities, rather than positions, more accurately reflects real-world scenarios where positions are typically directly observed, whereas velocities must be inferred. In the humanoid environment, we specifically mask only the q-velocity component.

\paragraph{Random Action} 
At the beginning of training and evaluation, we randomly sample actions from the action space. This is necessary because we use concatenated observations to sample actions from the policy. This approach might explain the larger standard deviation observed in POMDP compared to the fully observable case.


\begin{table}
\caption{Hyperparameters used for \algabb in state-based POMDP experiments.}
\label{tab:pomdp-arch}
\centering
\begin{tabular}{ll}
\hline
Hyperparameter & Value       \\ \hline
Actor Hidden Layer Dimensions            & (512, 512)                 \\
\algabb Representation Dimension & 512 / 128 (cheetah)        \\
Discount Factor $\gamma$       & 0.99                       \\
Critic Soft Update Factor $\tau$             & 0.005                      \\
Batch Size                     & 1024                       \\
Number of Noise Levels          & 1000                       \\
$\psi$ Network Width                  & 512 \\   
$\psi$ Network Hidden Depth                  & 1 \\   
$\zeta$ Network Width            & 256 \\
$\zeta$ Network Hidden Depth           & 1 \\

\hline

\end{tabular}
\end{table}

\section{Image-based  Partially Observable MDP Experiment}\label{sec:pomdp_img_appendix}

\subsection{Implementation Details}
Instead of directly diffusing over the raw pixel space, we used the VAE structure from \citet{zhang2023provable} to first encode the pixel observation $o$ into a 1-D latent embedding $e$. To deal with the partial observability, we also follow \citet{zhang2023provable} to concatenate the embeddings of the past three observations (denoted as $o^3$) together as the state of the agent, denoted as $e^3$. Let the next frame be $o'$ and its embedding be $e'$, the representation learning objective thus translates to fitting the score function $\psi(e^3, a)$ and $\zeta(\tilde{e}', \beta)$, i.e. performing the diffusion in the latent space. In the following paragraphs, we will detail the architectures for \algabb. 

\textbf{Diffusion Representation Learning. }Unlike previous state-based experiments, we formalize the network $\psi$ and $\zeta$ using the LN\_ResNet architecture proposed by IDQL \citep{hansen2023idql}. Compared to standard MLP networks, LN\_ResNet is equipped with layer normalization and skip connections, making it expressive enough for diffusion modeling. For representation learning, it is worthwhile to note that, apart from the loss objectives defined in Eq~\eqref{eq:diff_learn}, we also preserve the gradients of the representation networks and train them with the critic's loss defined in \eqref{eq:q-update}. This design choice aligns with previous works \citep{zhang2023provable} and encourages the representations to contain task-relevant information.

\textbf{RL optimization. }We develop our code based on DrQ-V2 \citep{yarats2021mastering}, a model-free RL algorithm designed for tasks with visual inputs. We preserved most of the design choices from DrQ-V2, except for the architectures of the actor and the critic. For the actor network, it receives the concatenated embeddings $e^3$ and outputs an action. The critic networks receive the diffusion representation $\phi_\theta$ as defined in the main text and predict the Q-values. 

The hyper-parameters for the score functions $\psi, \zeta$, the actor and the critic networks are listed in Table~\ref{tab:image-pomdp-arch}. 

\begin{table}[htbp]
\caption{Hyperparameters used for \algabb in image-based POMDP experiments.}
\label{tab:image-pomdp-arch}
\centering
\begin{tabular}{ll}
\hline
Hyper-parameters & Value \\
\hline
Actor Learning Rate & 0.0001 \\
Critic Learning Rate & 0.0001 \\
Learning Rate for $\psi, \zeta, \theta$  & 0.0003 \\
Actor Hidden Layer Dimensions & (1024, 1024) \\
\algabb Representation Dimension & 1024 \\
Discount Factor $\gamma$ & 0.99 \\
Critic Soft Update Factor $\tau$ & 0.01 \\
Batch Size & 1024 \\
Number of Noise Levels & 1000 \\
LN\_ResNet Layer Width for $\psi$ & 512 \\
LN\_ResNet Layer Width for $\zeta$ & 512 \\
\# of LN\_ResNet Layers for $\psi$ & 4 \\
\# of LN\_ResNet Layers for $\zeta$ & 2 \\
\hline
\end{tabular}
\end{table}

\subsection{Generation Results}
\label{sec:generation_results}
In addition to the learning curves, we also present the qualitative generation results using the learned score functions. Figure~\ref{fig:generation_result} illustrates the progression of the diffusion process over time. In the figure, we sample a batch of data $(o^3, a, o')$ from the replay buffer, and the first row depicts the ground-truth target image $o'$. Starting from the second row, we sample a random Gaussian noise $\tilde{e}'_{1000}\sim\mathcal{N}(0, I)$ and iteratively apply the learned reverse diffusion process using the guidance of $(e^3, a)$ to generate the latent embeddings at various stages of the diffusion. For every 200 steps, we pass the latent embedding to the decoder to obtain reconstructed images $\tilde{o}'_{t}$ and visualize them in the figure. 

The second row corresponds to the initial noisy embeddings and exhibits significant distortion and noise. As the denoising steps progress, we observe a gradual reduction in noise and an increasing clarity in the generated images. By the time we reach $\tilde{o}'_{600}$, the overall structure of the scene becomes more discernible, although some artifacts remain. Further along the denoising process, the images exhibit substantial improvements in terms of detail and realism. The final output, $\tilde{o}'_0$, closely resembles the original observations, indicating the effectiveness of our score functions in capturing the underlying information about the data.

\begin{figure*}
    \centering
    \includegraphics[width=\linewidth]{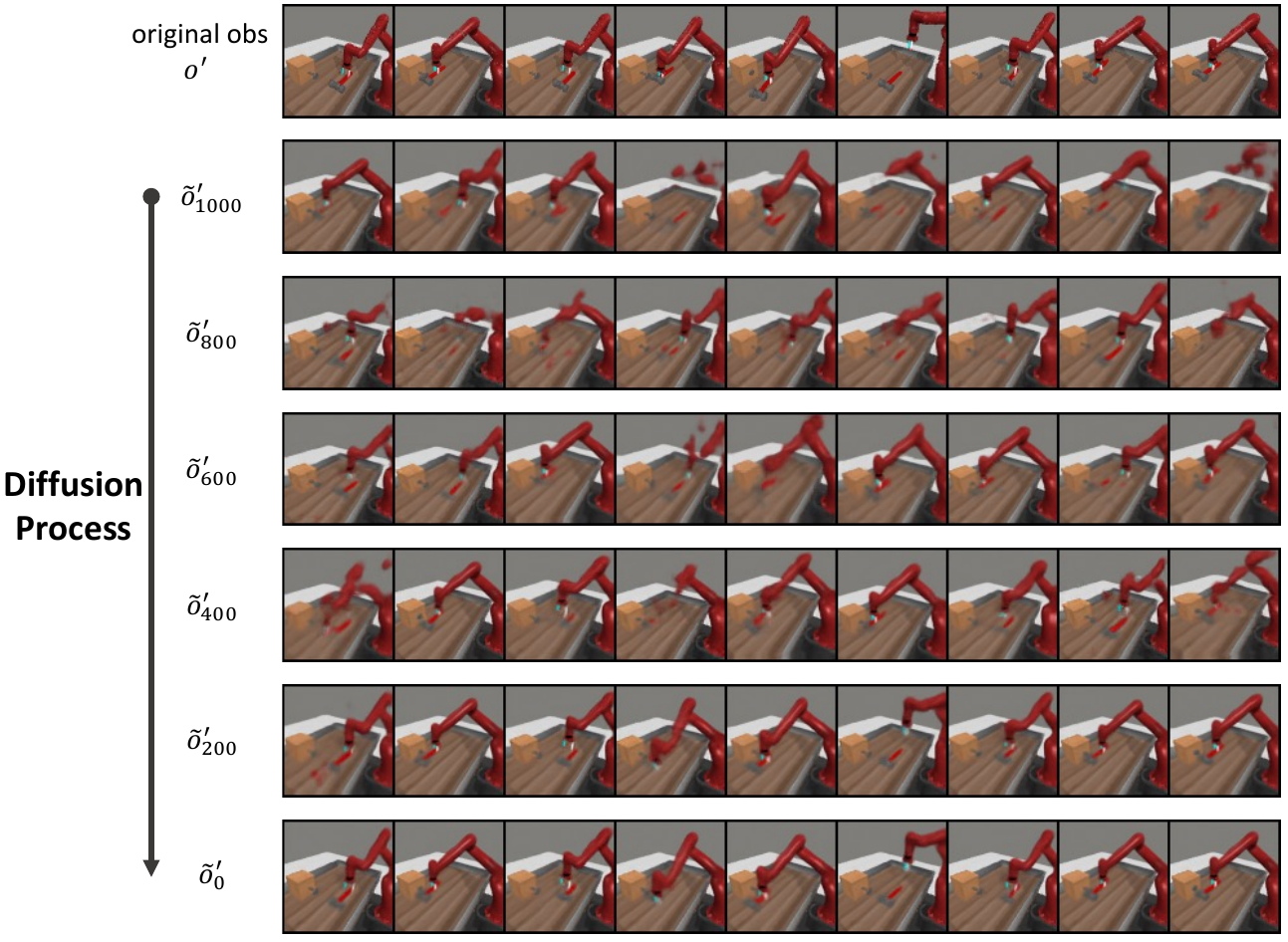}
    \caption{The generation results of \algabb. }
    \label{fig:generation_result}
\end{figure*}











\end{document}